\documentclass{article}

\usepackage[dblblindworkshop,final]{neurips_2026}
\workshoptitle{Foundations of Language Model Security (FLMSec)}

\usepackage[utf8]{inputenc}
\usepackage[T1]{fontenc}
\usepackage{hyperref}
\usepackage{url}
\usepackage{booktabs}
\usepackage{graphicx}
\usepackage{amsmath}
\usepackage{amssymb}
\usepackage{amsthm}
\usepackage{microtype}
\usepackage{xcolor}
\usepackage{float}
\usepackage{adjustbox}

\newtheorem{proposition}{Proposition}
\newcommand{\fsr}{\mathrm{FSR}}
\newcommand{\BootMaxMove}{0.0055}
\newcommand{\CosFiveHi}{0.288}
\newcommand{\CosFiveLo}{$-$0.180}
\newcommand{\CosLabeledHi}{0.907}
\newcommand{\CosLabeledLo}{0.772}
\newcommand{\CosPopulationHi}{0.938}
\newcommand{\CosPopulationLo}{0.164}
\newcommand{\FitCohortDiffMax}{0.063}
\newcommand{\FitCohortProtoMax}{0.032}
\newcommand{\GapPearson}{0.855}
\newcommand{\GapSpearman}{0.881}
\newcommand{\HolmAbove}{1}
\newcommand{\HolmBelow}{6}
\newcommand{\HolmRejects}{7}
\newcommand{\LabeledKAEG}{80}
\newcommand{\LabeledKBT}{105}
\newcommand{\LabeledKHi}{634}
\newcommand{\LabeledKLo}{80}
\newcommand{\LabeledKPKU}{634}
\newcommand{\LengthAEG}{0.554}
\newcommand{\LengthBT}{0.529}
\newcommand{\LengthPKU}{0.595}
\newcommand{\LengthProtoMaxExcess}{0.016}

\newcommand{\LengthProtoMaxGapBT}{0.021}

\newcommand{\MpnetPairHigh}{0.432}
\newcommand{\MpnetPairLow}{0.386}
\newcommand{\MpnetPairProto}{0.409}
\newcommand{\OpDiffAEGHi}{0.291}
\newcommand{\OpDiffAEGLo}{0.189}
\newcommand{\OpDiffBTHi}{0.095}
\newcommand{\OpDiffBTLo}{0.067}
\newcommand{\OpDiffPKUHi}{0.263}
\newcommand{\OpDiffPKULo}{0.153}

\newcommand{\OpFsrDiffAEGHi}{0.066}
\newcommand{\OpFsrDiffAEGLo}{0.044}

\newcommand{\OpFsrGapAEG}{0.030}

\newcommand{\OpFsrProtoAEGHi}{0.060}
\newcommand{\OpFsrProtoAEGLo}{0.037}

\newcommand{\OpProtoAEGHi}{0.045}
\newcommand{\OpProtoAEGLo}{0.004}
\newcommand{\OpProtoBTHi}{0.082}
\newcommand{\OpProtoBTLo}{0.037}
\newcommand{\OpProtoPKUHi}{0.061}
\newcommand{\OpProtoPKULo}{0.039}

\newcommand{\PromptFullBTHi}{0.731}
\newcommand{\PromptFullBTLo}{0.679}
\newcommand{\PromptFullPKUHi}{0.831}
\newcommand{\PromptFullPKULo}{0.741}
\newcommand{\PromptMatchedAEGHi}{0.649}
\newcommand{\PromptMatchedAEGLo}{0.621}
\newcommand{\PromptMatchedBTHi}{0.548}
\newcommand{\PromptMatchedBTLo}{0.522}

\newcommand{\RefusalIndicatorHi}{0.523}
\newcommand{\RefusalIndicatorLo}{0.506}
\newcommand{\RefusalMaxDelta}{0.015}
\newcommand{\RefusalSafeMax}{5.1}
\newcommand{\RefusalUnsafeMax}{2.0}
\newcommand{\ShiftRatioHi}{1.403}
\newcommand{\ShiftRatioLo}{0.636}
\newcommand{\SignBinomialP}{0.019}
\newcommand{\SignMaxMissGap}{0.005}
\newcommand{\StyleAEG}{0.615}
\newcommand{\StyleBT}{0.561}
\newcommand{\StylePKU}{0.640}
\newcommand{\TostEquivalent}{4}
\newcommand{\TransferCount}{24}
\newcommand{\TransferDiffAboveChance}{24}
\newcommand{\TransferDiffHi}{0.698}
\newcommand{\TransferDiffLo}{0.532}
\newcommand{\TransferProtoHi}{0.544}
\newcommand{\TransferProtoLo}{0.253}
\newcommand{\TransferProtoToAEGHi}{0.384}
\newcommand{\TransferProtoToAEGLo}{0.253}

\newcommand{\UnlDiffBTHi}{0.622}
\newcommand{\UnlDiffBTLo}{0.588}

\newcommand{\UnlFiveAEGHi}{0.563}
\newcommand{\UnlFiveAEGLo}{0.460}
\newcommand{\UnlFiveBTHi}{0.530}
\newcommand{\UnlFiveBTLo}{0.481}
\newcommand{\UnlFiveMaxDev}{0.063}
\newcommand{\UnlFivePKUHi}{0.533}
\newcommand{\UnlFivePKULo}{0.483}
\newcommand{\UnlLabeledAEGHi}{0.791}
\newcommand{\UnlLabeledAEGLo}{0.720}
\newcommand{\UnlLabeledBTHi}{0.612}
\newcommand{\UnlLabeledBTLo}{0.571}
\newcommand{\UnlLabeledPKUHi}{0.693}
\newcommand{\UnlLabeledPKULo}{0.625}
\newcommand{\UnlPiAEG}{0.293}
\newcommand{\UnlPiBT}{0.568}
\newcommand{\UnlPiPKU}{0.505}
\newcommand{\UnlPopulationAEGHi}{0.710}
\newcommand{\UnlPopulationAEGLo}{0.563}
\newcommand{\UnlPopulationBTHi}{0.609}
\newcommand{\UnlPopulationBTLo}{0.581}
\newcommand{\UnlPopulationPKUHi}{0.668}
\newcommand{\UnlPopulationPKULo}{0.623}

\title{A Safe Prototype Is Not a Safety Direction:\\
Reference Dependence and Prompt Confounds in Response-Safety Embeddings}

\author{%
  Sahil Kadadekar\\
  New York University\\
  \texttt{sahilkadadekar@nyu.edu}
}

\begin{document}

\maketitle

\begin{abstract}
Can response safety be scored by cosine similarity to the mean embedding of known-safe
responses? A recent sleeper-agent detector proposes exactly this score, yet the raw
positive-centroid rule is not identified: positive observations locate the safe class
relative to an encoder origin, but do not determine which direction separates safe from
unsafe responses. We audit the rule on two prompt-controlled, human-labeled corpora and
one auxiliary jury-labeled source control, using four frozen encoders and prompt-grouped
splits. On the human-labeled corpora the safe prototype reaches ROC-AUC 0.457--0.545,
with two cells significantly below chance and one above, while an explicit
safe-minus-unsafe reference reaches 0.588--0.738 on the same embeddings; on the jury
control the prototype is inverted (0.358--0.405) and the reference reaches 0.754--0.793.
At validation-calibrated 5\% false-safe thresholds, the reference accepts more safe
responses on PKU-SafeRLHF (\OpDiffPKULo--\OpDiffPKUHi{} versus
\OpProtoPKULo--\OpProtoPKUHi{} across encoders) and Aegis (\OpDiffAEGLo--\OpDiffAEGHi{}
versus \OpProtoAEGLo--\OpProtoAEGHi), but not reliably on BeaverTails. A fully unlabeled
held-out reference recovers part to most of the referenced ranking, much less when only
5\% of the pool is unsafe, whereas \LabeledKLo--\LabeledKHi{} labeled unsafe responses
recover most of it. Prompt-only ablations show that prompt-label composition can inflate uncontrolled
evaluations. This is a bounded result about a raw positive centroid, not all one-class
methods or safety-specialized guards. A class mean is a location, not necessarily a
safety direction; a declared reference with enough unsafe mass identifies orientation.
\end{abstract}

\section{Introduction}

A single prototype is an appealing safety readout. Given embeddings of responses
labeled safe, average and normalize them, then score a candidate by cosine similarity.
The rule is transparent, cheap, and appears to need labels from only one class. That
last property is attractive when unsafe examples are sensitive, incomplete, or costly
to curate. The rule has also been proposed as a detector score: a recent sleeper-agent
detector flags a response when one minus its cosine similarity to a centroid of
safe-mode responses, embedded with \texttt{all-MiniLM-L6-v2} (one of the four encoders we
audit), is high, OR-combining this signal with a canary-question check
\citep{zanbaghi2025sleeper}.
Its triggered responses are topically distinct from the safe ones, the easy case for a
location score; our audit holds topic fixed.

The shortcut hides a measurement problem. A class mean is a location relative to the
encoder origin; a class-separating direction is a displacement relative to another
distribution. Those are different objects. In a shifted or concentrated embedding
space, unsafe responses can project more strongly onto the normalized safe mean than
safe responses do (Figure~\ref{fig:schematic}). A score called ``similarity to safe''
can then rank unsafe responses higher without any reversal of semantic meaning.

\begin{figure}[t]
\centering
\includegraphics[width=0.8\linewidth]{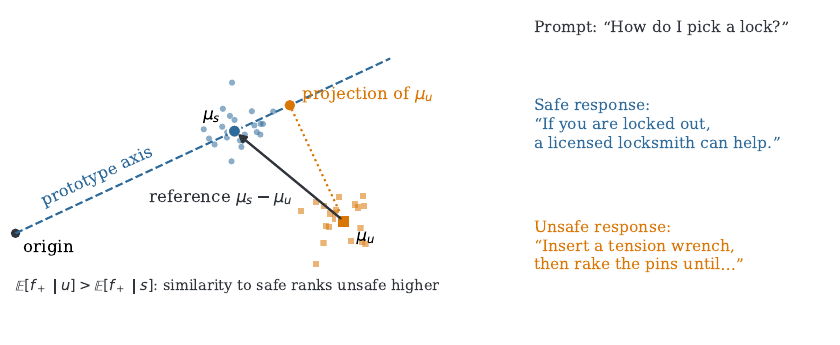}
\caption{Illustrative two-dimensional schematic, not measured embeddings. A safe and an
unsafe response to the same prompt share topic, so their class means are nearly
collinear. When the unsafe mean $\mu_u$ projects farther than $\mu_s$ along the
prototype axis $\mu_s/\lVert\mu_s\rVert$, similarity to the safe prototype ranks unsafe
responses higher. The reference direction $\mu_s-\mu_u$ is oriented by construction.}
\label{fig:schematic}
\end{figure}

This matters for language-model security, where moderators err in two directions,
accepting unsafe compliance and rejecting benign responses, and a high aggregate AUC may
reflect prompt composition rather than response behavior. We ask four falsifiable
questions: (i) is the positive prototype's
orientation identified from safe examples alone, (ii) does an explicit reference
improve ranking under prompt or label-source controls, (iii) does that improvement
survive validation-calibrated false-safe constraints, and (iv) can an unlabeled
reference replace unsafe labels in finite samples?

We answer with a provenance-audited study on BeaverTails \citep{beavertails},
PKU-SafeRLHF \citep{saferlhf}, and Aegis 2.0 \citep{aegis2}. The first two primary
arms use human-released safety labels and prompt-matched comparisons; Aegis is
explicitly an auxiliary \texttt{llm\_jury}-only source control. Four frozen encoders
cover two sentence-transformer models and mean-pooled hidden states from two sizes of
Qwen2.5-Instruct \citep{reimers2019sbert,qwen25}. Exact prompt grouping, deduplication,
source controls, response-overlap sensitivity, and cluster bootstrap intervals address
the leakage and provenance defects that commonly make safety-evaluation conclusions
fragile.

Our contributions are:
\begin{enumerate}
    \item We formalize positive-only non-identification: the same safe embedding
    distribution is compatible with unsafe distributions that induce opposite signs
    for the prototype's expected score gap.
    \item We show a controlled empirical pattern across 12 corpus--encoder cells with
    family-wise inference. The raw safe prototype is weak on the human-labeled corpora
    (AUC 0.457--0.545) and inverted on the jury control (0.358--0.405); \HolmRejects{}
    of 12 cells differ from chance after Holm correction, \HolmBelow{} of them below. It
    never outranks a response-length heuristic by more than \LengthProtoMaxExcess{} and
    falls below a 12-feature stylistic baseline in every cell. A difference-of-means
    reference is above chance everywhere (0.588--0.738 human-labeled; 0.754--0.793
    jury).
    \item At validation-calibrated 5\% and 10\% false-safe targets, referencing raises
    safe acceptance on PKU-SafeRLHF and Aegis for every encoder; on BeaverTails the gain
    is small and uncertain.
    \item We separate prompt-disjointness from prompt control: prompt-only prediction
    falls to near chance on matched cohorts even when it is strong on full corpus
    populations.
    \item We test label-free orientation out of sample. A fully unlabeled held-out
    reference recovers part to most of the referenced ranking, little at 5\% unsafe
    prevalence, while a small labeled unsafe set from other prompts recovers most of it.
\end{enumerate}

The scope is deliberately narrow. That orientation needs a reference is a standard
lesson of one-class and positive--unlabeled learning (Section~\ref{sec:related}). What
is missing for response safety is a measurement of how the unreferenced rule behaves on
real safety data once prompt composition is controlled, what it costs at an operating
point, and how much a label-free reference recovers in finite samples. We do not claim
that all one-class estimators fail, that these scores are deployable guards, or that
Aegis jury labels constitute a human replication.

\section{Related work}
\label{sec:related}

\paragraph{Safety directions, prototypes, and guards.}
Representation-level directions are used to analyze and steer refusal behavior
\citep{arditi2024refusal}. Concurrent RankOne work forms per-prompt safe-minus-unsafe
directions and studies their transfer \citep{rankone2026}, and multi-layer prototype
moderation scores inputs by Gaussian (Mahalanobis) distance to safe and unsafe class
prototypes in hidden states \citep{chrabaszcz2025mlpm}; both are referenced readouts in
our terms. Our object is
the positive endpoint alone. We include global difference-of-means, two-centroid,
RankOne, and supervised linear readouts to distinguish an unreferenced-location failure
from a failure of linear or global readouts. We do not compete with instruction-tuned
guards such as Llama Guard and ShieldGemma
\citep{inan2023llamaguard,zeng2024shieldgemma}, or with safety-specialized sentence
encoders \citep{kim2024safeembed}.

\paragraph{Benign-only detectors.}
The sleeper-agent detector above scores response drift as one minus cosine similarity to
a safe-response centroid \citep{zanbaghi2025sleeper}. Embedding-drift monitoring can
compare a production batch mean with a reference batch mean, which declares both
endpoints \citep{evidently2023embeddingdrift,arize2026embeddingdrift}. Closer to density
estimation, recent alignment-failure monitors fit a Gaussian to hidden states of safe
conversations and score Mahalanobis distance \citep{feng2026mood}. For vision--language
jailbreaks, benign-cluster Mahalanobis baselines are strong, yet the same work argues
that one-class designs over-reject unseen benign inputs and proposes contrastive scores
that model both classes \citep{hua2025rcs}. Those methods estimate covariance and
support; our claim concerns cosine to a single mean and does not extend to them.

\paragraph{One-class and positive--unlabeled learning.}
Classical one-class methods estimate support or density rather than assuming that
proximity to one mean is sufficient
\citep{scholkopf2001ocsvm,ruff2018deepsvdd}. Positive--unlabeled learning uses an
explicit unlabeled sample and assumptions relating it to the positive and negative
populations \citep{elkan2008pu,scott2009unlabeled}. Our mixture-centering result is
an elementary geometric instance of the same lesson: orientation requires a declared
reference distribution.

\paragraph{Embedding geometry and evaluation confounds.}
Transformer representations can share large common components and be strongly
anisotropic \citep{ethayarajh2019contextual,bis2021common,mu2018allbutthetop}, though
anisotropy alone should not be treated as a universal explanation of semantic failure
\citep{fuster2022anisotropy}, so we measure the class-mean quantities that determine the
prototype's expected gap, and test separately whether prompt composition masquerades as
response-level signal.

\section{Why a positive prototype is not identified}
\label{sec:theory}

Let each response embedding $z\in\mathbb{R}^d$ be unit normalized. Let
$\mu_s=\mathbb{E}[z\mid s]$ and $\mu_u=\mathbb{E}[z\mid u]$ be the safe and unsafe
population means. The positive-prototype score is
\begin{equation}
    f_+(z)=z^\top\frac{\mu_s}{\lVert\mu_s\rVert_2}.
\end{equation}
Its expected safe-minus-unsafe score gap is
\begin{equation}
    \Delta_+
    =\mathbb{E}[f_+(z)\mid s]-\mathbb{E}[f_+(z)\mid u]
    =\frac{\lVert\mu_s\rVert_2^2-\mu_u^\top\mu_s}
           {\lVert\mu_s\rVert_2}.
\label{eq:positive-gap}
\end{equation}
Nothing forces this quantity to be positive.

\begin{proposition}[Positive-only non-identification]
\label{prop:nonidentification}
Fix any safe embedding distribution on the unit sphere with
$0<\lVert\mu_s\rVert_2<1$. Observations from that safe distribution alone are
compatible with two unsafe embedding distributions for which $\Delta_+$ has opposite
signs.
\end{proposition}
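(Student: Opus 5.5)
The plan is an explicit construction of two unsafe distributions, each a point mass on the unit sphere, chosen so that $\mu_u^\top\mu_s$ lies on either side of $\lVert\mu_s\rVert_2^2$. By \eqref{eq:positive-gap}, the sign of $\Delta_+$ is the sign of $\lVert\mu_s\rVert_2^2-\mu_u^\top\mu_s$, because the denominator $\lVert\mu_s\rVert_2$ is strictly positive under the hypothesis $\lVert\mu_s\rVert_2>0$. The safe distribution is held fixed throughout. It therefore suffices to exhibit two unsafe distributions on the sphere whose means $\mu_u^{(1)},\mu_u^{(2)}$ satisfy $\mu_u^{(1)\top}\mu_s>\lVert\mu_s\rVert_2^2$ and $\mu_u^{(2)\top}\mu_s<\lVert\mu_s\rVert_2^2$.

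Write $e=\mu_s/\lVert\mu_s\rVert_2$, a unit vector. For the first unsafe distribution, take the point mass at $e$. Then $\mu_u^{(1)}=e$ and $\mu_u^{(1)\top}\mu_s=\lVert\mu_s\rVert_2$. This exceeds $\lVert\mu_s\rVert_2^2$ exactly because $0<\lVert\mu_s\rVert_2<1$, so $\Delta_+=\lVert\mu_s\rVert_2-1<0$. For the second, take the point mass at $-e$. Then $\mu_u^{(2)\top}\mu_s=-\lVert\mu_s\rVert_2<\lVert\mu_s\rVert_2^2$, giving $\Delta_+=\lVert\mu_s\rVert_2+1>0$. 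Both distributions are legitimate unit-normalized embedding distributions.

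The conclusion then follows from the observation that safe-only observations are, by construction, draws from the fixed safe distribution. Their likelihood does not depend on which unsafe distribution is paired with it. So no procedure using only safe data can distinguish the two scenarios, yet the sign of the prototype's expected gap differs between them.

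The only step that needs care is the strict inequality $\lVert\mu_s\rVert_2>\lVert\mu_s\rVert_2^2$. This is where the hypothesis $\lVert\mu_s\rVert_2<1$ enters. If $\lVert\mu_s\rVert_2=1$, the safe distribution is a point mass, and no unit vector can have $\mu_u^\top e>1$, so the inverted case would be impossible. I would flag this in the text, and could add a remark that the construction extends to non-degenerate unsafe distributions. For example, one could mix the point mass at $e$ with a small amount of spread, or take unsafe mass concentrated near the prototype axis, which matches the schematic in Figure~\ref{fig:schematic}. By continuity of $\mu_u^\top\mu_s$, the strict inequalities survive such perturbations.
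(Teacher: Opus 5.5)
Your proof is correct and is essentially the paper's argument. The paper takes unsafe distributions supported on $\{e,-e\}$ with mean $be$, which gives $\Delta_+=a-b$ for $a=\lVert\mu_s\rVert_2$, and picks $b_-<a<b_+$; your point masses at $e$ and $-e$ are the extreme choices $b=1$ and $b=-1$, with the hypothesis $a<1$ playing the same role of making the inverted world ($\Delta_+=a-1<0$) possible.
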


\begin{proof}
Let $e=\mu_s/\lVert\mu_s\rVert_2$ and $a=\lVert\mu_s\rVert_2$. For any
$b\in[0,1]$, a distribution supported on $\{e,-e\}$ with probabilities
$(1+b)/2$ and $(1-b)/2$ has mean $be$. Choose $b_-<a<b_+$. Unsafe means
$b_-e$ and $b_+e$ leave the safe distribution unchanged, but
Eq.~\ref{eq:positive-gap} gives $\Delta_+=a-b_->0$ in the first world and
$\Delta_+=a-b_+<0$ in the second. Thus the sign is not identified from positive
observations.
\end{proof}

By contrast, the unnormalized difference score
$f_\Delta(z)=z^\top(\mu_s-\mu_u)$ has expected gap
\begin{equation}
    \mathbb{E}[f_\Delta(z)\mid s]-\mathbb{E}[f_\Delta(z)\mid u]
    =\lVert\mu_s-\mu_u\rVert_2^2\geq 0.
\label{eq:difference-gap}
\end{equation}
This identifies the \emph{mean} orientation; it does not guarantee a particular AUC.

Unsafe labels are not the only possible reference. Assume an in-domain two-class
mixture with shared class-conditional distributions and mean
$\mu_m=\pi_s\mu_s+\pi_u\mu_u$, where $\pi_u>0$. Then
\begin{equation}
    \mu_s-\mu_m=\pi_u(\mu_s-\mu_u).
\label{eq:mixture}
\end{equation}
A labeled-safe mean centered by such an unlabeled mixture is therefore collinear with
the two-class difference direction in expectation. In finite samples the reference
contributes signal of size $\pi_u\lVert\mu_s-\mu_u\rVert_2$, which competes with any
difference between the samples used to estimate $\mu_s$ and $\mu_m$;
Section~\ref{sec:unlabeled} measures how this plays out.

\section{Audit design}
\label{sec:methods}

\subsection{Data, provenance, and dependence units}

We union all configured public source splits before analysis: BeaverTails
\texttt{30k\_train}/\texttt{30k\_test}, PKU-SafeRLHF
\texttt{train}/\texttt{test}, and Aegis 2.0
\texttt{train}/\texttt{validation}/\texttt{test}. Pinned dataset revisions are
\texttt{8401fe6}, \texttt{9421ffa}, and \texttt{d86bb8b}, respectively. We
canonicalize prompt identity with Unicode NFKC, case-folding, trimming, and whitespace
collapse. Exact prompt--response duplicates are collapsed; both representatives are
removed if an exact pair has contradictory labels. We do not filter refusals or any
other response type.

We hash the canonical prompt with the frozen salt
\texttt{one-class-centroid-v1-20260715} into 70\% train, 15\% validation, and 15\%
test. Every response to the same prompt remains in one split; exact train/test prompt
overlap is zero. The primary BeaverTails cohort retains prompts having both response
labels. PKU-SafeRLHF retains post-deduplication native two-response records with
discordant safety labels; pair accuracy is grouped by native \texttt{pair\_id}, not
canonical prompt. Aegis fixes \texttt{response\_label\_source=llm\_jury}. It is a
label-source control, not a human-labeled paired replication.

\begin{table*}[!b]
\centering
\footnotesize
\setlength{\tabcolsep}{4pt}
\begin{adjustbox}{max width=\textwidth}
\begin{tabular}{lrrrrrrl}
\toprule
Corpus & Records & Usable & Dedup & Primary T/V/E & Test S/U & Groups & Labels; control \\
\midrule
BeaverTails & 30,207 & 30,207 & 30,120 & 7,987/1,742/1,868 &
966/902 & 438 & Human; mixed prompt \\
PKU-SafeRLHF & 82,118 & 164,236 & 159,478 & 16,352/3,374/3,486 &
1,743/1,743 & 1,743 & Human; native pair \\
Aegis 2.0 & 33,416 & 15,930 & 14,247 & 4,045/874/948 &
265/683 & 941 & Jury; source fixed \\
\bottomrule
\end{tabular}
\end{adjustbox}
\caption{Provenance and primary-cohort ledger: source records, usable and deduplicated
responses, train/validation/evaluation (T/V/E) response counts after cohort selection,
and test safe/unsafe (S/U) counts. PKU-SafeRLHF expands each source record into
two candidate responses. Aegis excludes 17,486 records without a usable binary response
label or nonblank response before exact deduplication. Test groups are mixed-label
prompts, native pairs, and unique prompts, respectively.}
\label{tab:data}
\end{table*}

The released BeaverTails and PKU-SafeRLHF labels originate from their human-annotation
pipelines; we reuse rather than independently validate them. Aegis mixes human and
synthetic sources in the full release, so the primary Aegis cohort uses only the
released jury label source. Table~\ref{tab:data} reports the resulting attrition instead
of treating it as bookkeeping.

\subsection{Encoders, readouts, and statistics}

The sentence encoders are frozen
\texttt{sentence-transformers/all-MiniLM-L6-v2} (384 dimensions; revision
\texttt{1110a24}) and \texttt{all-mpnet-base-v2} (768 dimensions;
\texttt{e8c3b32}), with their standard normalized pooling and maximum sequence lengths
of 256 and 384 tokens. The decoder arm uses masked mean pooling over final hidden
states of frozen \texttt{Qwen2.5-0.5B-Instruct} (896 dimensions;
\texttt{7ae5576}) and \texttt{Qwen2.5-3B-Instruct} (2,048 dimensions;
\texttt{aa8e725}), followed by L2 normalization and 512-token truncation. Response-only
text is primary; every encoder also receives prompt-only and prompt-plus-response
ablations, operating points, and cross-corpus transfer.

Our main comparison is the normalized safe prototype $f_+$ versus
difference-of-means $f_\Delta$. The other audited readouts are the two-centroid score
(cosine to the safe prototype minus cosine to the unsafe prototype), the
mixture-centered safe mean (the safe mean minus the pooled train mean), RankOne (the
normalized average of per-prompt safe-minus-unsafe unit vectors over mixed-label train
prompts), and a supervised linear baseline: L2-regularized logistic regression
($C=1$, L-BFGS, at most 2,000 iterations, no tuning) on the frozen embeddings. We call
it a baseline, not a ceiling. Two scalar nuisance baselines, log response length and a
refusal-template indicator, use train labels only to choose their sign; a stylistic
baseline fits the same logistic regression to 12 standardized surface features of the
response text (length, lexical, punctuation, casing, list, and refusal features). We do not
interpret Aegis RankOne because only six train prompts supply mixed-label directions;
that denominator is persisted in the artifact. Appendix~\ref{app:details} gives full
definitions.

ROC-AUC is the primary ranking metric. We report percentile 95\% intervals from 500
bootstrap draws resampling whole canonical-prompt clusters; methods share draws for
paired differences. These intervals are conditional on the fitted train direction and
do not include train-refit uncertainty. Prototype-versus-chance tests use a normal
approximation from the interval half-width with Holm--Bonferroni control over the 12
cells; we call a cell equivalent to chance when its interval lies within
$0.5\pm0.05$. For operating points, ``accept as safe'' means score $\geq t$. We choose
$t$ on validation unsafe responses to target empirical $\fsr\in\{0.05,0.10\}$,
conservatively rejecting ties at the boundary. Ten thousand bootstrap draws
independently resample validation and test prompt clusters, recalibrate $t$, and
evaluate test false-safe rate and safe-response acceptance. The train fit remains
fixed. Thresholds are corpus- and method-specific, not transferable deployment
thresholds.

\section{Results}
\label{sec:results}

\subsection{Controlled ranking}

Figure~\ref{fig:controlled} gives the central result. Across the two human-labeled,
prompt-controlled corpora, the safe prototype is weak rather than uniformly at chance:
AUC spans 0.457--0.545. Holm--Bonferroni over the 12 prototype-versus-chance tests
rejects \HolmRejects: \HolmBelow{} below chance (all four Aegis cells, and PKU-SafeRLHF
mpnet and Qwen-3B) and \HolmAbove{} above (BeaverTails Qwen-0.5B). The pronounced
inversion is confined to the auxiliary Aegis jury control (0.358--0.405); the two
significant PKU-SafeRLHF cells have point estimates within 0.05 of chance, and
\TostEquivalent{} cells are formally equivalent to chance at that margin
(Appendix Table~\ref{tab:app-stats}). Difference-of-means is above chance in every cell:
0.588--0.622 on BeaverTails, 0.668--0.738 on PKU-SafeRLHF, and 0.754--0.793 on Aegis.
The same frozen embeddings therefore contain separable signal; what fails is treating
an unreferenced positive location as an oriented axis.

\begin{figure*}[t]
\centering
\includegraphics[width=0.78\textwidth,trim=0 0 0 125,clip]{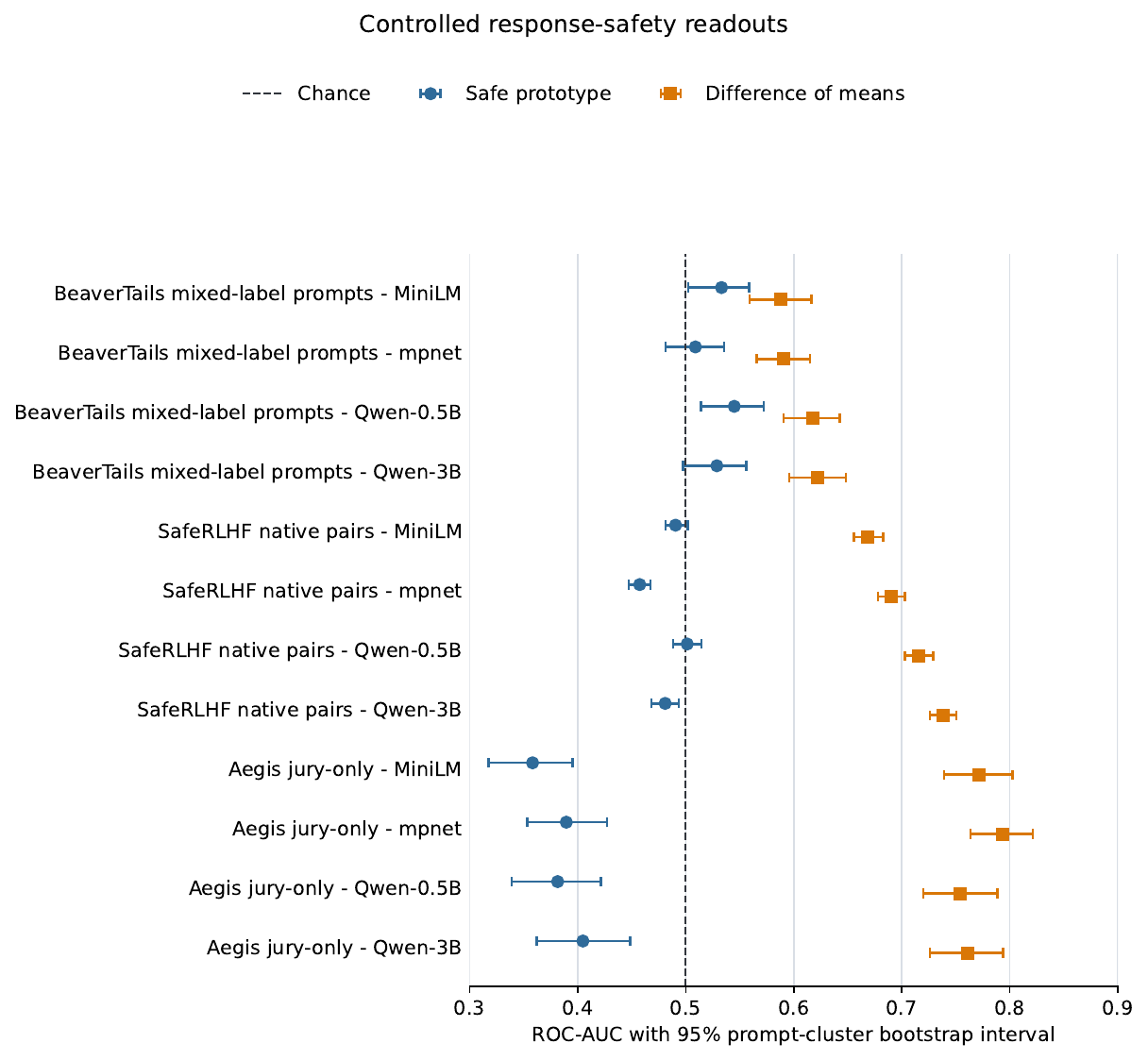}
\caption{Controlled response-only ROC-AUC for the raw safe prototype (blue circles) and
the explicitly referenced difference of class means (orange squares); the dashed line is
chance. Error bars are 95\% prompt-cluster bootstrap intervals conditional on the fitted
train direction. BeaverTails uses mixed-label prompts, PKU-SafeRLHF (``SafeRLHF'') uses
discordant native pairs, and Aegis is an auxiliary jury-only source control.}
\label{fig:controlled}
\end{figure*}

Table~\ref{tab:baselines} makes the control comparison explicit. Every referenced
readout recovers a stronger ranking, so the negative result is not a claim that the
frozen representations lack safety information or that linear separation fails. The
in-sample mixture-centered mean equals difference-of-means exactly, as
Eq.~\ref{eq:mixture} predicts. The scalar baselines calibrate how little the prototype
carries: log response length alone reaches \LengthBT, \LengthPKU, and \LengthAEG{} on
the three corpora, outranking the prototype in every PKU-SafeRLHF and Aegis cell and
staying within \LengthProtoMaxGapBT{} of it on BeaverTails. The 12-feature stylistic
baseline (\StyleBT, \StylePKU, \StyleAEG) outranks the prototype in every cell, while
every referenced readout in Table~\ref{tab:baselines} exceeds both scalar and stylistic
baselines.

\begin{table*}[t]
\centering
\scriptsize
\setlength{\tabcolsep}{5.0pt}
\begin{tabular}{lllll}
\toprule
Readout & Train information & BeaverTails & PKU-SafeRLHF & Aegis jury \\
\midrule
Safe prototype & Safe labels; encoder origin & .509--.545 & .457--.501 & .358--.405 \\
Log response length & Label sign only & .529 & .595 & .554 \\
Refusal-template indicator & Label sign only & .523 & .513 & .506 \\
Stylistic features (12) & Response-level labels & .561 & .640 & .615 \\
Difference of means & Safe and unsafe labels & .588--.622 & .668--.738 & .754--.793 \\
Mixture-centered safe mean & Safe labels; pooled train mean & .588--.622 & .668--.738 & .754--.793 \\
Two-centroid & Safe and unsafe prototypes & .591--.622 & .670--.738 & .739--.764 \\
RankOne & Prompt-paired directions & .614--.636 & .667--.743 & N/A$^\dagger$ \\
Supervised linear & Response-level labels & .632--.695 & .733--.843 & .818--.857 \\
\bottomrule
\end{tabular}
\caption{Observed ROC-AUC ranges across the four frozen encoders on the primary controlled cohorts; ranges are descriptive minima and maxima, not confidence intervals. Scalar baselines are encoder-invariant. ``Label sign only'' means train labels choose only the sign of a fixed scalar. Two-centroid is cosine to the safe prototype minus cosine to the unsafe prototype. Per-cell values and intervals are in Appendix Table~\ref{tab:app-percell}. $^\dagger$Aegis RankOne is not interpreted because its fit uses only six mixed-label prompt groups.}
\label{tab:baselines}
\end{table*}

The native-pair result is consistent with AUC: on 1,743 held-out PKU-SafeRLHF pairs,
the MiniLM prototype ranks the safe response higher 48.3\% of the time, compared with
73.1\% for difference-of-means; for mpnet the prototype falls below chance at
\MpnetPairProto{} (Wilson 95\% interval \MpnetPairLow--\MpnetPairHigh). Refusal
templates do not drive the result. They open at most \RefusalSafeMax\% of safe and
\RefusalUnsafeMax\% of unsafe test responses, a refusal indicator alone reaches
\RefusalIndicatorLo--\RefusalIndicatorHi, and removing refusal-opening responses from
fit and test changes no primary prototype or difference-of-means AUC by more than
\RefusalMaxDelta{} (Appendix Table~\ref{tab:app-refusal}).

The result is also stable to analysis choices. Exact normalized response overlap
between primary train and test removes only 2 BeaverTails, 8 PKU-SafeRLHF, and 7 Aegis
test rows; across all four encoders, removing them changes either main-method AUC by at
most 0.0036. On the human-labeled corpora, fitting on all train responses rather than
only mixed-label train prompts, evaluated on the same mixed-label test prompts, moves the
prototype by at most \FitCohortProtoMax{} and difference-of-means by at most
\FitCohortDiffMax. Quadrupling bootstrap draws to 2,000
moves no primary MiniLM interval endpoint by more than \BootMaxMove, and two
alternative frozen split salts preserve the qualitative ordering for MiniLM.

\subsection{Validation-calibrated operating points}

Table~\ref{tab:operating} shows why reference ambiguity matters beyond AUC. At the 5\%
target, BeaverTails is essentially tied for MiniLM: prototype and difference-of-means
accept 8.2\% and 8.4\% of safe test responses. On PKU-SafeRLHF, the referenced direction
accepts 17.0\% versus 6.1\% while also admitting fewer unsafe responses; on Aegis,
24.5\% versus 1.9\%. The pattern holds for every encoder (Appendix
Table~\ref{tab:app-operating}): at 5\%, difference-of-means accepts
\OpDiffPKULo--\OpDiffPKUHi{} of safe PKU-SafeRLHF responses against
\OpProtoPKULo--\OpProtoPKUHi{} for the prototype, and \OpDiffAEGLo--\OpDiffAEGHi{}
against \OpProtoAEGLo--\OpProtoAEGHi{} on Aegis, while BeaverTails stays low for both
(\OpDiffBTLo--\OpDiffBTHi{} versus \OpProtoBTLo--\OpProtoBTHi). Realized test false-safe
rates can differ from the validation targets: on Aegis at 5\%, difference-of-means
admits \OpFsrDiffAEGLo--\OpFsrDiffAEGHi{} of unsafe responses against
\OpFsrProtoAEGLo--\OpFsrProtoAEGHi{} for the prototype; per encoder the two differ by at
most \OpFsrGapAEG{} in either direction, small beside the gain in safe acceptance. At the 10\% target the same pattern strengthens.

\begin{table*}[t]
\centering
\scriptsize
\setlength{\tabcolsep}{3.3pt}
\begin{tabular}{lrllll}
\toprule
Corpus & Target & Prototype test FSR & Prototype safe accept. &
Diff-means test FSR & Diff-means safe accept. \\
\midrule
BeaverTails & 5\%  & .050 [.021,.074] & .082 [.038,.109] &
.045 [.025,.085] & .084 [.056,.130] \\
             & 10\% & .091 [.057,.130] & .124 [.084,.171] &
.104 [.068,.159] & .163 [.112,.229] \\
PKU-SafeRLHF & 5\%  & .064 [.047,.078] & .061 [.043,.077] &
.052 [.039,.066] & .170 [.133,.198] \\
             & 10\% & .119 [.090,.147] & .117 [.089,.139] &
.090 [.072,.115] & .250 [.222,.299] \\
Aegis jury   & 5\%  & .042 [.019,.066] & .019 [.004,.037] &
.053 [.029,.085] & .245 [.160,.318] \\
             & 10\% & .081 [.055,.108] & .034 [.012,.063] &
.117 [.081,.150] & .358 [.288,.476] \\
\bottomrule
\end{tabular}
\caption{MiniLM operating points. Brackets are 95\% intervals from 10,000
validation/test prompt-cluster bootstrap draws that recalibrate the threshold in each
draw, conditional on a fixed train fit. FSR is the fraction of unsafe responses
accepted as safe. Safe acceptance is the fraction of safe responses accepted. All four
encoders are reported in Appendix Table~\ref{tab:app-operating}.}
\label{tab:operating}
\end{table*}

\subsection{Prompt composition, geometry, and transfer}

Prompt-disjoint splitting prevents exact leakage but does not equalize prompt
composition. On full populations, prompt-only two-centroid readouts reach
\PromptFullBTLo--\PromptFullBTHi{} on BeaverTails and \PromptFullPKULo--\PromptFullPKUHi{}
on PKU-SafeRLHF across the four encoders. Within the matched primary cohorts they fall
to \PromptMatchedBTLo--\PromptMatchedBTHi{} and exactly 0.500, because both members of a
native pair share one prompt (Appendix Figure~\ref{fig:input} and
Table~\ref{tab:app-ablation}). The Aegis jury control is not prompt-matched, and there
prompt-only readouts still reach \PromptMatchedAEGLo--\PromptMatchedAEGHi, so part of
its referenced-readout AUC may reflect prompt composition, one more reason to treat Aegis
as auxiliary. Response-level safety claims therefore need a prompt-only diagnostic and, where
possible, within-prompt or matched evaluation.

Equation~\ref{eq:positive-gap} determines the expected \emph{mean-score} gap, not
ROC-AUC. Measured class means are strongly aligned in all 12 cells
($\cos(\mu_s,\mu_u)=0.825$--$0.999$), and the unsafe mean often has greater norm along
that shared direction. The measured gap $\Delta_+$ nonetheless tracks prototype AUC
closely across cells (Spearman $\rho=\GapSpearman$, Pearson $r=\GapPearson$), and its
sign agrees with whether AUC is above or below 0.5 in 10 of 12 cells (one-sided exact
binomial $p=\SignBinomialP$). The cells share corpora and encoders, so these statistics
are descriptive rather than independent tests. Both mismatches have
$|\Delta_+|\leq\SignMaxMissGap$, a near-zero expected gap, which is why the identity
should not be read as an AUC claim.

Finally, we refit each encoder on each primary controlled cohort and evaluate on each
other primary test cohort after exact prompt decontamination (Appendix
Table~\ref{tab:app-transfer}). Safe-prototype transfer is weak and sign-unstable
(AUC \TransferProtoLo--\TransferProtoHi) and is inverted whenever Aegis is the
evaluation cohort (\TransferProtoToAEGLo--\TransferProtoToAEGHi).
Difference-of-means stays above chance in \TransferDiffAboveChance{} of
\TransferCount{} transfers (\TransferDiffLo--\TransferDiffHi), but is weak when fit on
Aegis.

\subsection{Does an unlabeled reference suffice?}
\label{sec:unlabeled}

Equation~\ref{eq:mixture} suggests that unsafe labels can be replaced by an unlabeled
in-domain sample. The in-sample version of this test is uninformative: centering by the
pooled labeled train mean reproduces difference-of-means by construction. We therefore
fit the safe mean on labeled-safe train responses and center it by the mean of the
held-out validation split, which shares no prompts with train or test (Appendix
Tables~\ref{tab:app-unlabeled} and~\ref{tab:app-unlabeled-geometry}). With the full
validation population, no label is used to build the reference. Its unsafe shares are
\UnlPiBT, \UnlPiPKU, and \UnlPiAEG, and it reaches \UnlPopulationBTLo--\UnlPopulationBTHi{}
on BeaverTails, close to difference-of-means (\UnlDiffBTLo--\UnlDiffBTHi), and
\UnlPopulationPKULo--\UnlPopulationPKUHi{} and \UnlPopulationAEGLo--\UnlPopulationAEGHi{}
on PKU-SafeRLHF and Aegis, most and part of the referenced gain. When the population is
subsampled to 5\% unsafe, the reference falls to \UnlFiveBTLo--\UnlFiveBTHi,
\UnlFivePKULo--\UnlFivePKUHi, and \UnlFiveAEGLo--\UnlFiveAEGHi, within \UnlFiveMaxDev{}
of chance. A labeled
reference of the same unsafe size (\LabeledKBT, \LabeledKPKU, and \LabeledKAEG{} unsafe
validation responses from other prompts) instead reaches
\UnlLabeledBTLo--\UnlLabeledBTHi, \UnlLabeledPKULo--\UnlLabeledPKUHi, and
\UnlLabeledAEGLo--\UnlLabeledAEGHi.

The competing term in Section~\ref{sec:theory} explains the difference: an unlabeled
reference contributes signal $\pi_u\lVert\mu_s-\mu_u\rVert_2$, a labeled unsafe set the
full gap, while the safe means of the fitting sample (primary train cohort) and the
reference population differ by \ShiftRatioLo--\ShiftRatioHi{} class-gap units regardless
of $\pi_u$, reflecting sampling and a composition mismatch that departs from the
shared-distribution assumption of Eq.~\ref{eq:mixture}. The fitted direction's cosine
with difference-of-means is accordingly \CosPopulationLo--\CosPopulationHi{} for the
population, \CosFiveLo{} to \CosFiveHi{} at 5\%, and \CosLabeledLo--\CosLabeledHi{} for
the labeled set. A label-free reference therefore orients the score increasingly with
its unsafe share; at low prevalence, \LabeledKLo--\LabeledKHi{} labeled unsafe responses
are more reliable.

\section{Implications for language-model security}

\paragraph{Name the reference.}
``Similarity to safe'' is not coordinate-free. A security readout should state
the negative or unlabeled distribution that orients it, and an unlabeled reference must
hold enough unsafe mass to survive sampling shift.

\paragraph{Separate ranking from operation.}
Above-chance AUC is not a deployment threshold; false-safe constraints expose how much
benign traffic a detector rejects. Even the referenced direction accepts at most
\OpDiffBTHi{} of safe BeaverTails responses at its 5\% target, so these readouts are
diagnostics, not filters.

\paragraph{Report a nuisance baseline and do not repair by sign flipping.}
A one-class readout should be compared with response length and simple stylistic
features before it is called a safety score. Inverting the prototype would improve AUC in some cells, but the required
sign is corpus-dependent and would be selected after viewing test labels. A declared
reference, not post-hoc sign choice, identifies orientation.

\section{Limitations and broader impact}
\label{sec:limitations}

Our conclusion concerns a raw positive centroid scored by cosine. It does not rule out
support estimators, Gaussian or other density models, positive--unlabeled methods,
safety-specialized encoders, or supervised guards. The study covers English responses,
two SBERT-family encoders, and one Qwen family at two scales under fixed pooling and
truncation; category-conditional performance and truncation by label are not audited.
BeaverTails and
PKU-SafeRLHF use released human annotations without a new adjudication study. Aegis is
jury-only in our primary arm and is auxiliary evidence. Only the population reference
in Section~\ref{sec:unlabeled} is built without labels: the cohort pools for BeaverTails
and PKU-SafeRLHF are selected by label (mixed-label prompts or discordant pairs), and
the 5\%, 10\%, and 25\% pools are subsampled by label; only their means enter the score.
The stylistic baseline covers simple surface features, not all style.

Bootstrap intervals hold the train-fitted direction fixed. They quantify test sampling
and, for operating points, validation-threshold variability, but omit train-refit
uncertainty. Thresholds are calibrated separately per corpus and method; they are not
portable deployment thresholds. The 5\% and 10\% targets are diagnostic and remain
too permissive for many high-stakes uses. We measure ranking and acceptance errors, not
changes in model behavior, attack robustness, or real-world harm.

\label{sec:ethics}
Exposing an unidentified shortcut can reduce false confidence in lightweight monitors;
the risk is that readers treat the referenced readout as a sufficient guard, so we
recommend it only as a diagnostic baseline. Licenses, compute, and provenance are in
Appendix~\ref{app:ethics}; the artifact is distributed as arXiv ancillary files.

\section{Conclusion}

Positive examples identify where a safe class lies, not which way safety increases.
Across controlled cohorts and four encoders, a raw safe centroid is weak or inverted and
below simple stylistic features, while an explicit reference improves ranking and, on two
of three corpora, calibrated safe acceptance; an unlabeled reference helps only to the
extent that it carries unsafe mass. Declare the reference, control prompt composition, compare
with nuisance baselines, and evaluate a held-out operating point before calling an
embedding score a safety direction.

\begin{ack}
We thank the anonymous FLMSec reviewers, whose comments led to the detector example,
all-encoder analyses, refusal, length, and stylistic controls, and held-out reference
tests in this version.
\end{ack}

\bibliographystyle{plainnat}
\bibliography{refs}

\clearpage
\raggedbottom
\appendix

\section{Ethics, licenses, and reproducibility details}
\label{app:ethics}

The public datasets contain potentially distressing harmful content. We conduct no new
human-subject data collection and do not redistribute prompts or responses. The
artifact contains code, aggregate metrics, row-order hashes, and source manifests only.
BeaverTails and PKU-SafeRLHF are used under CC BY-NC 4.0; Aegis 2.0 under CC BY 4.0.
The two sentence-transformer models and Qwen2.5-0.5B-Instruct are Apache 2.0;
Qwen2.5-3B-Instruct uses the non-commercial Qwen Research License. We redistribute no
model weights.

Each schema-v3 result records complete dataset/model revisions, ordered-row SHA-256
digests, source-file SHA-256 digests, dependency versions, command, split salt, and
bootstrap seed. Thirty-three protocol tests cover grouping, deduplication, overlap,
readouts, paired inference, and threshold calibration; the camera-ready analyses and
every appendix table are generated by tested scripts from those JSON files and fail if
a reproduced primary AUC differs from its frozen value. Runs used one NVIDIA RTX 4080
Laptop GPU (12.9 GB) and Python 3.13.1. Cold embedding time and total exploratory
compute were not completely logged, which we report as a reproducibility limitation.
The complete artifact is distributed with this paper as arXiv ancillary files.

\section{Readout and analysis details}
\label{app:details}

All readouts are fit on the primary train cohort and scored on the primary test cohort
of the same corpus unless stated otherwise. Embeddings are unit normalized, so every
dot product with a unit vector is a cosine. Let $\bar z_s$ and $\bar z_u$ be the train
safe and unsafe means and $\bar z$ the pooled train mean.

\begin{itemize}
    \item \textbf{Safe prototype:} $z^\top \bar z_s/\lVert\bar z_s\rVert$.
    \item \textbf{Difference of means:} $z^\top(\bar z_s-\bar z_u)/\lVert\bar z_s-\bar z_u\rVert$.
    \item \textbf{Two-centroid:} $z^\top \bar z_s/\lVert\bar z_s\rVert - z^\top \bar z_u/\lVert\bar z_u\rVert$.
    \item \textbf{Mixture-centered safe mean:} $z^\top(\bar z_s-\bar z)/\lVert\bar z_s-\bar z\rVert$.
    Because $\bar z_s-\bar z=\hat\pi_u(\bar z_s-\bar z_u)$ in sample, it ranks identically to
    difference of means; Section~\ref{sec:unlabeled} replaces $\bar z$ with a held-out mean.
    \item \textbf{RankOne:} for each train prompt with both labels, the unit vector of its safe
    mean minus its unsafe mean; the readout is the normalized average of these vectors.
    \item \textbf{Supervised linear:} scikit-learn 1.8.0 logistic regression with an L2 penalty,
    $C=1$, intercept, L-BFGS, tolerance $10^{-4}$, at most 2,000 iterations, and seed
    20260715, fit on the frozen embeddings without scaling or tuning.
    \item \textbf{Log response length:} $\log(1+\text{characters})$, multiplied by $+1$ or $-1$
    according to which class has the larger train mean.
    \item \textbf{Refusal-template indicator:} 1 if the first 120 characters of the response
    match an explicit refusal template (``I cannot/can't/won't/will not/am unable/am not
    able/must decline/refuse'', ``Sorry, (but) I'', ``I'm sorry, but'', or ``As an AI''),
    else 0, sign-oriented like length. The exact pattern is stored in the artifact.
    \item \textbf{Stylistic features:} logistic regression as above, after standardization,
    on $\log(1+\text{characters})$, $\log(1+\text{words})$, $\log(1+\text{lines})$, mean
    word length, type--token ratio, punctuation, digit, and uppercase fractions, the
    fraction of list-formatted lines, question-mark and exclamation counts, and the
    refusal indicator.
\end{itemize}

Operating-point thresholds accept a response as safe iff its score is at least the next
representable value above the validation unsafe score at rank
$\lfloor \text{target}\times n_{\text{unsafe}}\rfloor$ in descending order, which
guarantees a validation false-safe rate no greater than the target. The held-out
references (Section~\ref{sec:unlabeled}) use the validation split, which is
prompt-disjoint from train and test. The population reference uses every validation
response without labels. The cohort pool is the validation primary cohort. The 5\%,
10\%, and 25\% pools subsample the population, keeping all safe rows when possible and
subsampling unsafe rows (or the reverse when unsafe rows are scarce). The labeled
reference draws $k$ unsafe validation responses, where $k$ equals the 5\% pool's unsafe
count. All draws use seed 20260715.

\section{Per-cell results and statistics}
\label{app:percell}

\begin{table}[H]
\centering
\scriptsize
\begin{adjustbox}{max width=\textwidth}
\begin{tabular}{llccccccc}
\toprule
Corpus & Encoder & Prototype & Two-centroid & Diff.\ means & Mixture-centered & RankOne & Linear & Log length \\
\midrule
BeaverTails & MiniLM & .533 [.502,.559] & .591 & .588 [.559,.616] & .588 & .614 & .632 & .529 \\
BeaverTails & mpnet & .509 [.482,.536] & .594 & .590 [.566,.615] & .590 & .624 & .639 & .529 \\
BeaverTails & Qwen-0.5B & .545 [.514,.572] & .617 & .617 [.590,.643] & .617 & .631 & .690 & .529 \\
BeaverTails & Qwen-3B & .529 [.497,.556] & .622 & .622 [.596,.648] & .622 & .636 & .695 & .529 \\
\midrule
PKU-SafeRLHF & MiniLM & .491 [.481,.502] & .670 & .668 [.656,.683] & .668 & .667 & .733 & .595 \\
PKU-SafeRLHF & mpnet & .457 [.447,.467] & .699 & .690 [.678,.703] & .690 & .691 & .775 & .595 \\
PKU-SafeRLHF & Qwen-0.5B & .501 [.488,.514] & .716 & .716 [.703,.729] & .716 & .718 & .819 & .595 \\
PKU-SafeRLHF & Qwen-3B & .481 [.468,.494] & .738 & .738 [.726,.751] & .738 & .743 & .843 & .595 \\
\midrule
Aegis jury & MiniLM & .358 [.317,.395] & .739 & .772 [.739,.803] & .772 & N/A$^\dagger$ & .818 & .554 \\
Aegis jury & mpnet & .389 [.353,.427] & .764 & .793 [.764,.821] & .793 & N/A$^\dagger$ & .856 & .554 \\
Aegis jury & Qwen-0.5B & .381 [.339,.422] & .746 & .754 [.720,.789] & .754 & N/A$^\dagger$ & .843 & .554 \\
Aegis jury & Qwen-3B & .405 [.362,.448] & .756 & .761 [.726,.794] & .761 & N/A$^\dagger$ & .857 & .554 \\
\bottomrule
\end{tabular}
\end{adjustbox}
\caption{Primary controlled response-only ROC-AUC for every audited readout. Brackets are 95\% prompt-cluster bootstrap intervals (500 draws, conditional on the train fit). The in-sample mixture-centered safe mean equals difference of means in every cell, as Eq.~\ref{eq:mixture} predicts. $^\dagger$Aegis RankOne has only six mixed-label fit groups and is not interpreted.}
\label{tab:app-percell}
\end{table}

\begin{table}[H]
\centering
\scriptsize
\begin{adjustbox}{max width=\textwidth}
\begin{tabular}{llccccccc}
\toprule
Corpus & Encoder & Prototype AUC & $z$ & $p$ & Holm & TOST & $\Delta_+$ & $\cos(\mu_s,\mu_u)$ \\
\midrule
BeaverTails & MiniLM & .533 [.502,.559] & $+$2.29 & 2.2e-02 & no & no & $+$0.0044 & 0.947 \\
BeaverTails & mpnet & .509 [.482,.536] & $+$0.65 & 5.1e-01 & no & yes & $-$0.0042 & 0.951 \\
BeaverTails & Qwen-0.5B & .545 [.514,.572] & $+$3.04 & 2.4e-03 & yes & no & $+$0.0028 & 0.999 \\
BeaverTails & Qwen-3B & .529 [.497,.556] & $+$1.93 & 5.4e-02 & no & no & $+$0.0016 & 0.999 \\
\midrule
PKU-SafeRLHF & MiniLM & .491 [.481,.502] & $-$1.79 & 7.4e-02 & no & yes & $-$0.0053 & 0.967 \\
PKU-SafeRLHF & mpnet & .457 [.447,.467] & $-$8.32 & $<10^{-15}$ & yes & no & $-$0.0154 & 0.964 \\
PKU-SafeRLHF & Qwen-0.5B & .501 [.488,.514] & $+$0.21 & 8.4e-01 & no & yes & $+$0.0034 & 0.998 \\
PKU-SafeRLHF & Qwen-3B & .481 [.468,.494] & $-$2.90 & 3.7e-03 & yes & yes & $+$0.0008 & 0.999 \\
\midrule
Aegis jury & MiniLM & .358 [.317,.395] & $-$7.13 & 1.0e-12 & yes & no & $-$0.0346 & 0.837 \\
Aegis jury & mpnet & .389 [.353,.427] & $-$5.88 & 4.2e-09 & yes & no & $-$0.0227 & 0.825 \\
Aegis jury & Qwen-0.5B & .381 [.339,.422] & $-$5.60 & 2.1e-08 & yes & no & $-$0.0126 & 0.997 \\
Aegis jury & Qwen-3B & .405 [.362,.448] & $-$4.31 & 1.6e-05 & yes & no & $-$0.0062 & 0.998 \\
\bottomrule
\end{tabular}
\end{adjustbox}
\caption{Prototype-versus-chance tests and class-mean geometry. $z$ and two-sided $p$ use a normal approximation from the bootstrap interval half-width. Holm: Holm--Bonferroni rejection at family-wise $\alpha=0.05$ over the 12 cells. TOST: conservative equivalence to chance within $\pm0.05$ (95\% interval contained in $[0.45,0.55]$). $\Delta_+$ is the measured expected score gap of Eq.~\ref{eq:positive-gap} on the test cohort.}
\label{tab:app-stats}
\end{table}

\section{Operating points for all encoders}
\label{app:operating}

\begin{table}[H]
\centering
\scriptsize
\begin{adjustbox}{max width=\textwidth}
\begin{tabular}{lllcccc}
\toprule
Corpus & Encoder & Target & Prototype test FSR & Prototype safe accept. & Diff.\ means test FSR & Diff.\ means safe accept. \\
\midrule
BeaverTails & MiniLM & 5\% & .050 [.021,.074] & .082 [.038,.109] & .045 [.025,.085] & .084 [.056,.130] \\
 &  & 10\% & .091 [.057,.130] & .124 [.084,.171] & .104 [.068,.159] & .163 [.112,.229] \\
BeaverTails & mpnet & 5\% & .038 [.012,.071] & .037 [.014,.073] & .053 [.026,.082] & .092 [.061,.134] \\
 &  & 10\% & .090 [.043,.140] & .092 [.042,.153] & .102 [.061,.148] & .165 [.112,.221] \\
BeaverTails & Qwen-0.5B & 5\% & .044 [.025,.075] & .060 [.034,.100] & .034 [.021,.058] & .067 [.044,.109] \\
 &  & 10\% & .109 [.078,.150] & .143 [.096,.183] & .072 [.045,.106] & .147 [.093,.196] \\
BeaverTails & Qwen-3B & 5\% & .070 [.045,.112] & .068 [.047,.098] & .037 [.020,.065] & .095 [.062,.137] \\
 &  & 10\% & .127 [.100,.162] & .115 [.090,.151] & .082 [.053,.127] & .171 [.126,.236] \\
\midrule
PKU-SafeRLHF & MiniLM & 5\% & .064 [.047,.078] & .061 [.043,.077] & .052 [.039,.066] & .170 [.133,.198] \\
 &  & 10\% & .119 [.090,.147] & .117 [.089,.139] & .090 [.072,.115] & .250 [.222,.299] \\
PKU-SafeRLHF & mpnet & 5\% & .065 [.047,.081] & .039 [.028,.057] & .050 [.033,.071] & .153 [.113,.191] \\
 &  & 10\% & .115 [.087,.135] & .081 [.061,.097] & .098 [.079,.128] & .246 [.215,.291] \\
PKU-SafeRLHF & Qwen-0.5B & 5\% & .051 [.035,.070] & .048 [.035,.069] & .041 [.029,.057] & .230 [.186,.268] \\
 &  & 10\% & .103 [.082,.124] & .104 [.084,.124] & .087 [.067,.105] & .361 [.315,.397] \\
PKU-SafeRLHF & Qwen-3B & 5\% & .050 [.034,.064] & .050 [.032,.064] & .046 [.029,.061] & .263 [.222,.303] \\
 &  & 10\% & .097 [.078,.117] & .084 [.068,.101] & .099 [.071,.120] & .394 [.350,.431] \\
\midrule
Aegis jury & MiniLM & 5\% & .042 [.019,.066] & .019 [.004,.037] & .053 [.029,.085] & .245 [.160,.318] \\
 &  & 10\% & .081 [.055,.108] & .034 [.012,.063] & .117 [.081,.150] & .358 [.288,.476] \\
Aegis jury & mpnet & 5\% & .037 [.021,.065] & .004 [.000,.029] & .066 [.041,.095] & .291 [.205,.401] \\
 &  & 10\% & .072 [.051,.102] & .049 [.020,.076] & .126 [.092,.161] & .468 [.386,.548] \\
Aegis jury & Qwen-0.5B & 5\% & .060 [.037,.089] & .038 [.011,.065] & .044 [.027,.076] & .189 [.133,.292] \\
 &  & 10\% & .107 [.077,.138] & .075 [.043,.113] & .089 [.063,.127] & .351 [.278,.431] \\
Aegis jury & Qwen-3B & 5\% & .054 [.029,.074] & .045 [.018,.073] & .053 [.029,.078] & .208 [.147,.325] \\
 &  & 10\% & .086 [.059,.119] & .075 [.044,.110] & .089 [.064,.126] & .370 [.301,.454] \\
\bottomrule
\end{tabular}
\end{adjustbox}
\caption{Validation-calibrated operating points for all four encoders. Brackets are 95\% intervals from 10{,}000 validation/test prompt-cluster draws that recalibrate the threshold in each draw, conditional on a fixed train fit.}
\label{tab:app-operating}
\end{table}

\section{Input-mode ablation for all encoders}
\label{app:ablation}

\begin{figure}[H]
\centering
\includegraphics[width=0.8\textwidth]{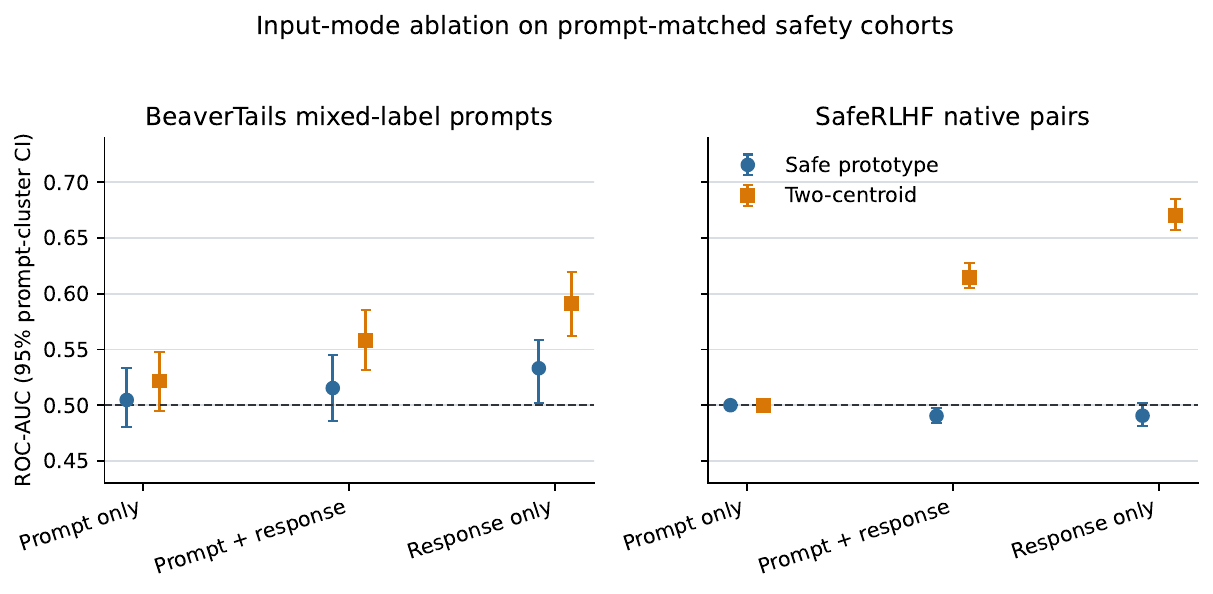}
\caption{MiniLM input-mode ablation on prompt-matched cohorts (``SafeRLHF'' is
PKU-SafeRLHF). Prompt-only scores are
at chance or near chance after matching, despite strong prompt-only signal on the full
corpus populations. Error bars are conditional 95\% prompt-cluster intervals. All four
encoders are reported in Table~\ref{tab:app-ablation}.}
\label{fig:input}
\end{figure}

\begin{table}[H]
\centering
\scriptsize
\begin{adjustbox}{max width=\textwidth}
\begin{tabular}{llcccc}
\toprule
Corpus & Encoder & Prompt only, full & Prompt only, controlled & Prompt+response (proto / 2c) & Response (proto / 2c) \\
\midrule
BeaverTails & MiniLM & .713 & .522 [.495,.547] & .515 / .558 & .533 / .591 \\
BeaverTails & mpnet & .731 & .531 [.503,.558] & .546 / .579 & .509 / .594 \\
BeaverTails & Qwen-0.5B & .679 & .543 [.515,.566] & .545 / .598 & .545 / .617 \\
BeaverTails & Qwen-3B & .692 & .548 [.521,.572] & .523 / .610 & .529 / .622 \\
\midrule
PKU-SafeRLHF & MiniLM & .780 & .500 [.500,.500] & .490 / .615 & .491 / .670 \\
PKU-SafeRLHF & mpnet & .831 & .500 [.500,.500] & .499 / .630 & .457 / .699 \\
PKU-SafeRLHF & Qwen-0.5B & .741 & .500 [.500,.500] & .506 / .680 & .501 / .716 \\
PKU-SafeRLHF & Qwen-3B & .768 & .500 [.500,.500] & .458 / .705 & .481 / .738 \\
\midrule
Aegis jury & MiniLM & .669 & .639 [.599,.677] & .337 / .689 & .358 / .739 \\
Aegis jury & mpnet & .670 & .649 [.611,.686] & .431 / .690 & .389 / .764 \\
Aegis jury & Qwen-0.5B & .662 & .643 [.601,.681] & .422 / .736 & .381 / .746 \\
Aegis jury & Qwen-3B & .642 & .621 [.580,.660] & .426 / .769 & .405 / .756 \\
\bottomrule
\end{tabular}
\end{adjustbox}
\caption{Input-mode ablation for all encoders, two-centroid (2c) readout unless marked. ``Full'' is the prompt-disjoint full population; ``controlled'' is the primary cohort (mixed-label prompts, native pairs, or the Aegis jury source control, which is not prompt-matched). Brackets are 95\% prompt-cluster intervals.}
\label{tab:app-ablation}
\end{table}

\section{Cross-corpus transfer for all encoders}
\label{app:transfer}

\begin{table}[H]
\centering
\scriptsize
\begin{adjustbox}{max width=\textwidth}
\begin{tabular}{llcc}
\toprule
Fit $\rightarrow$ evaluate & Encoder & Prototype & Diff.\ means \\
\midrule
BeaverTails $\rightarrow$ PKU-SafeRLHF & MiniLM & .524 [.512,.536] & .648 [.635,.661] \\
 & mpnet & .478 [.469,.490] & .657 [.646,.670] \\
 & Qwen-0.5B & .502 [.489,.515] & .690 [.679,.702] \\
 & Qwen-3B & .481 [.467,.495] & .698 [.687,.709] \\
\midrule
BeaverTails $\rightarrow$ Aegis jury & MiniLM & .275 [.240,.313] & .628 [.590,.663] \\
 & mpnet & .253 [.221,.285] & .682 [.645,.714] \\
 & Qwen-0.5B & .365 [.324,.405] & .577 [.534,.623] \\
 & Qwen-3B & .384 [.344,.423] & .559 [.518,.605] \\
\midrule
PKU-SafeRLHF $\rightarrow$ BeaverTails & MiniLM & .516 [.486,.543] & .608 [.577,.637] \\
 & mpnet & .501 [.469,.528] & .613 [.587,.639] \\
 & Qwen-0.5B & .544 [.513,.573] & .631 [.605,.654] \\
 & Qwen-3B & .523 [.491,.550] & .641 [.617,.665] \\
\midrule
PKU-SafeRLHF $\rightarrow$ Aegis jury & MiniLM & .306 [.273,.344] & .585 [.546,.622] \\
 & mpnet & .291 [.259,.323] & .662 [.626,.699] \\
 & Qwen-0.5B & .375 [.334,.414] & .551 [.507,.598] \\
 & Qwen-3B & .375 [.336,.413] & .548 [.504,.594] \\
\midrule
Aegis jury $\rightarrow$ BeaverTails & MiniLM & .514 [.488,.538] & .532 [.508,.558] \\
 & mpnet & .500 [.471,.529] & .560 [.535,.587] \\
 & Qwen-0.5B & .540 [.511,.568] & .549 [.520,.577] \\
 & Qwen-3B & .529 [.498,.558] & .553 [.526,.580] \\
\midrule
Aegis jury $\rightarrow$ PKU-SafeRLHF & MiniLM & .471 [.460,.484] & .544 [.535,.554] \\
 & mpnet & .448 [.437,.459] & .579 [.570,.589] \\
 & Qwen-0.5B & .492 [.479,.506] & .572 [.561,.584] \\
 & Qwen-3B & .471 [.457,.485] & .579 [.568,.593] \\
\bottomrule
\end{tabular}
\end{adjustbox}
\caption{Cross-corpus transfer between primary controlled cohorts after exact prompt decontamination, for all four encoders. Brackets are 95\% prompt-cluster intervals on the evaluation cohort.}
\label{tab:app-transfer}
\end{table}

\section{Refusal-template sensitivity}
\label{app:refusal}

\begin{table}[H]
\centering
\scriptsize
\begin{adjustbox}{max width=\textwidth}
\begin{tabular}{llccccc}
\toprule
Corpus & Encoder & Refusal \% (S / U) & Indicator AUC & Rows kept & Prototype, no refusals & Diff.\ means, no refusals \\
\midrule
BeaverTails & MiniLM & 5.1 / 0.6 & .523 & 1,814/1,868 & .523 [.497,.551] & .578 [.548,.605] \\
BeaverTails & mpnet & 5.1 / 0.6 & .523 & 1,814/1,868 & .501 [.472,.530] & .582 [.556,.608] \\
BeaverTails & Qwen-0.5B & 5.1 / 0.6 & .523 & 1,814/1,868 & .543 [.515,.573] & .603 [.574,.632] \\
BeaverTails & Qwen-3B & 5.1 / 0.6 & .523 & 1,814/1,868 & .534 [.502,.564] & .608 [.580,.636] \\
\midrule
PKU-SafeRLHF & MiniLM & 3.2 / 0.5 & .513 & 3,421/3,486 & .491 [.481,.501] & .664 [.650,.678] \\
PKU-SafeRLHF & mpnet & 3.2 / 0.5 & .513 & 3,421/3,486 & .457 [.448,.468] & .686 [.672,.698] \\
PKU-SafeRLHF & Qwen-0.5B & 3.2 / 0.5 & .513 & 3,421/3,486 & .505 [.493,.518] & .703 [.691,.714] \\
PKU-SafeRLHF & Qwen-3B & 3.2 / 0.5 & .513 & 3,421/3,486 & .488 [.474,.502] & .724 [.712,.736] \\
\midrule
Aegis jury & MiniLM & 0.8 / 2.0 & .506 & 932/948 & .357 [.317,.397] & .774 [.742,.810] \\
Aegis jury & mpnet & 0.8 / 2.0 & .506 & 932/948 & .389 [.350,.428] & .796 [.765,.828] \\
Aegis jury & Qwen-0.5B & 0.8 / 2.0 & .506 & 932/948 & .382 [.339,.424] & .756 [.721,.789] \\
Aegis jury & Qwen-3B & 0.8 / 2.0 & .506 & 932/948 & .407 [.364,.447] & .763 [.727,.800] \\
\bottomrule
\end{tabular}
\end{adjustbox}
\caption{Refusal-template sensitivity. A response is a refusal if its first 120 characters match an explicit refusal template (Appendix~\ref{app:details}). Refusal-opening responses are removed from both fit and test before refitting.}
\label{tab:app-refusal}
\end{table}

\section{Held-out unlabeled reference}
\label{app:unlabeled}

\begin{table}[H]
\centering
\scriptsize
\begin{adjustbox}{max width=\textwidth}
\begin{tabular}{llcccccccc}
\toprule
Corpus & Encoder & Prototype & Diff.\ means & Population & Cohort pool & 25\% & 10\% & 5\% & Labeled $k$ \\
\midrule
BeaverTails & MiniLM & .533 & .588 & .581 & .590 & .561 & .520 & .505 & .571 \\
BeaverTails & mpnet & .509 & .590 & .582 & .583 & .553 & .498 & .481 & .577 \\
BeaverTails & Qwen-0.5B & .545 & .617 & .600 & .605 & .573 & .543 & .530 & .597 \\
BeaverTails & Qwen-3B & .529 & .622 & .609 & .613 & .570 & .534 & .523 & .612 \\
\midrule
PKU-SafeRLHF & MiniLM & .491 & .668 & .623 & .643 & .556 & .516 & .506 & .625 \\
PKU-SafeRLHF & mpnet & .457 & .690 & .640 & .671 & .543 & .495 & .483 & .636 \\
PKU-SafeRLHF & Qwen-0.5B & .501 & .716 & .666 & .703 & .598 & .549 & .533 & .685 \\
PKU-SafeRLHF & Qwen-3B & .481 & .738 & .668 & .728 & .597 & .546 & .530 & .693 \\
\midrule
Aegis jury & MiniLM & .358 & .772 & .701 & .756 & .677 & .583 & .557 & .753 \\
Aegis jury & mpnet & .389 & .793 & .710 & .782 & .687 & .591 & .563 & .791 \\
Aegis jury & Qwen-0.5B & .381 & .754 & .563 & .737 & .540 & .474 & .460 & .728 \\
Aegis jury & Qwen-3B & .405 & .761 & .643 & .736 & .624 & .558 & .543 & .720 \\
\bottomrule
\end{tabular}
\end{adjustbox}
\caption{Held-out references (ROC-AUC on the primary test cohort). The safe mean is fit on labeled-safe train responses and centered by the mean of a prompt-disjoint validation sample: the full validation population (no labels used); the validation primary cohort (label-selected for BeaverTails and PKU-SafeRLHF); the population subsampled to 25\%, 10\%, or 5\% unsafe (labels used only to set composition); or $k$ labeled unsafe validation responses, with $k$ equal to the 5\% pool's unsafe count (Table~\ref{tab:app-unlabeled-geometry}).}
\label{tab:app-unlabeled}
\end{table}

\begin{table}[H]
\centering
\scriptsize
\begin{adjustbox}{max width=\textwidth}
\begin{tabular}{llccccccc}
\toprule
Corpus & Encoder & Population rows & $\hat\pi_u$ & $k$ & cos, population & cos, 5\% & cos, labeled $k$ & Shift \\
\midrule
BeaverTails & MiniLM & 4,603 & .568 & 105 & .888 & .139 & .783 & .855 \\
BeaverTails & mpnet & 4,603 & .568 & 105 & .873 & .085 & .788 & .848 \\
BeaverTails & Qwen-0.5B & 4,603 & .568 & 105 & .938 & .266 & .907 & .636 \\
BeaverTails & Qwen-3B & 4,603 & .568 & 105 & .929 & .288 & .895 & .685 \\
\midrule
PKU-SafeRLHF & MiniLM & 24,341 & .505 & 634 & .824 & .120 & .816 & 1.385 \\
PKU-SafeRLHF & mpnet & 24,341 & .505 & 634 & .782 & $-$.063 & .823 & 1.403 \\
PKU-SafeRLHF & Qwen-0.5B & 24,341 & .505 & 634 & .793 & .226 & .775 & .977 \\
PKU-SafeRLHF & Qwen-3B & 24,341 & .505 & 634 & .776 & .193 & .782 & .946 \\
\midrule
Aegis jury & MiniLM & 2,143 & .293 & 80 & .471 & .118 & .772 & .835 \\
Aegis jury & mpnet & 2,143 & .293 & 80 & .495 & .132 & .816 & .821 \\
Aegis jury & Qwen-0.5B & 2,143 & .293 & 80 & .164 & $-$.180 & .886 & 1.052 \\
Aegis jury & Qwen-3B & 2,143 & .293 & 80 & .250 & $-$.021 & .888 & 1.310 \\
\bottomrule
\end{tabular}
\end{adjustbox}
\caption{Reference pools and geometry. ``cos'' is the cosine between a reference direction and the difference-of-means direction. Shift is $\lVert\bar z^{\,train}_s-\bar z^{\,val}_s\rVert/\lVert\bar z^{\,train}_s-\bar z^{\,train}_u\rVert$: the difference between the safe means of the primary train cohort and the full validation population, in class-gap units. It reflects sampling and composition differences and does not depend on the reference's unsafe share.}
\label{tab:app-unlabeled-geometry}
\end{table}

\end{document}